\def\namedlabel#1#2{\begingroup
	#2%
	\def\@currentlabel{#2}%
	\phantomsection\label{#1}\endgroup
}
\pgfplotsset{compat=1.16}
\def\beq{\begin{equation}}\def\eeq{\end{equation}}\usepackage[T1]{fontenc}
\DeclareMathOperator*{\argmax}{arg\,max}
\newcommand{\cX}{{\cal X}}
\newcommand{\cS}{{\cal S}}
\newcommand{\cT}{{\mathcal T}}
\newtheorem{theorem}{Theorem}
\newtheorem{definition}{Definition}[section]
\newtheorem{lemma}[theorem]{Lemma}
\renewcommand{\vec}[1]{\mathbf{#1}}
\newcommand*{\TwoRows}[2]{{\begin{tabular}[c]{@{}l@{}} {#1} \\ {#2} \end{tabular}}}{}
\begin{document}

\title{Efficient Data Analytics on Augmented Similarity Triplets}

\author{\IEEEauthorblockN{Sarwan Ali$^{*1}$, Muhammad Ahmad$^{*2}$, Umair ul Hassan$^{3}$ , Muhammad Asad Khan$^{4}$, Shafiq Alam$^{5}$, Imdadullah Khan$^{2+}$}
\IEEEauthorblockA{
Georgia State University, Atlanta, USA$^{1}$ \\
Lahore University of Management Science, Lahore, Pakistan$^{2}$\\
2B 6VYCCM KNUniversity of Galway, Galway, Ireland $^{3}$ \\
Hazara University, Mansehra, Pakistan$^{4}$\\
Massey University, Auckland, New Zealand$^{5}$ \\
sali85@student.gsu.edu, 17030056@lums.edu.pk, 
umair.ulhassan@universityofgalway.ie \\
asadkhan@hu.edu.pk, SAlam1@massey.ac.nz, imdad.khan@lums.edu.pk} 
\\
{$^{*}$ Equal Contribution}
{$^{+}$ Corresponding Author}
\thanks{2022 IEEE International Conference on Big Data (Big Data) | 978-1-6654-8045-1/22/\$31.00 \copyright 2022 European Union}
}

\maketitle

\thispagestyle{plain}
\pagestyle{plain}
\begin{abstract}
Data analysis requires a pairwise proximity measure over objects. Recent work has extended this to situations where the distance information between objects is given as comparison results of distances between three objects (triplets). Humans find comparison tasks much easier than the exact distance computation, and such data can be easily obtained in big quantities via crowdsourcing. In this work, we propose triplets augmentation, an efficient method to extend the triplets data by inferring the hidden implicit information from the existing data. Triplets augmentation improves the quality of kernel-based and kernel-free data analytics. We also propose a novel set of algorithms for common data analysis tasks based on triplets. These methods work directly with triplets and avoid kernel evaluations, thus are scalable to big data. We demonstrate that our methods outperform the current best-known techniques and are robust to noisy data.
\end{abstract}

\begin{IEEEkeywords}
Data augmentation, Crowd-sourcing, Triplets, Relative Similarity
\end{IEEEkeywords}

\section{Introduction}\label{section:introduction}

To extract knowledge from data, it is generally assumed that input data is drawn from a vector space with a well-defined pairwise proximity measure~\cite{ali2021predicting}. However, big data comes in a wide variety; and, in many cases (e.g., sequences, images, and text), the data objects are not given as feature vectors. Such data needs to be mapped to a meaningful feature space, where vector-space-based data mining algorithms can be employed. Many data mining and machine learning methods, such as support vector machines (SVM), kernel-PCA, and agglomerative clustering, do not explicitly require input data as feature vectors; instead, they only utilize the pairwise distance information~\cite{ali2022efficient}. Moreover, the choice of a distance or similarity measure is often arbitrary and does not necessarily capture the inherent similarity between objects.

Recently, a relaxed paradigm of performing data analytics from ordinal pairwise distance data has emerged~\cite{Kleindessner2017Kernel}.
The primary motivation for using distance comparison data is to rely on human judgments about the qualitative similarity between objects in human-based computation settings (e.g., crowdsourcing, a prominent source of Big data). Unlike computers, it is widely accepted that humans are better at making qualitative assessments, such as determining two images to be ``perceptually'' similar. Human annotations capture proximity information latent in data objects. Since it is inherently easier to compare two distances than to actually compute the exact distance~\cite{stewart2005absolute}, comparison-based distance information tends to be more accurate and easier to obtain. For instance, in Figure~\ref{fig:3cars3triplets}, one can infer that $\vec{c}_1$ is more similar to $\vec{c}_2$, than it is to $\vec{c}_3$ (based on vehicle's utility), even though, a pixel-based distance measure might bring about very different results.

\begin{figure}[!t]
	\centering
	\includegraphics[width=.96\linewidth,page=10]{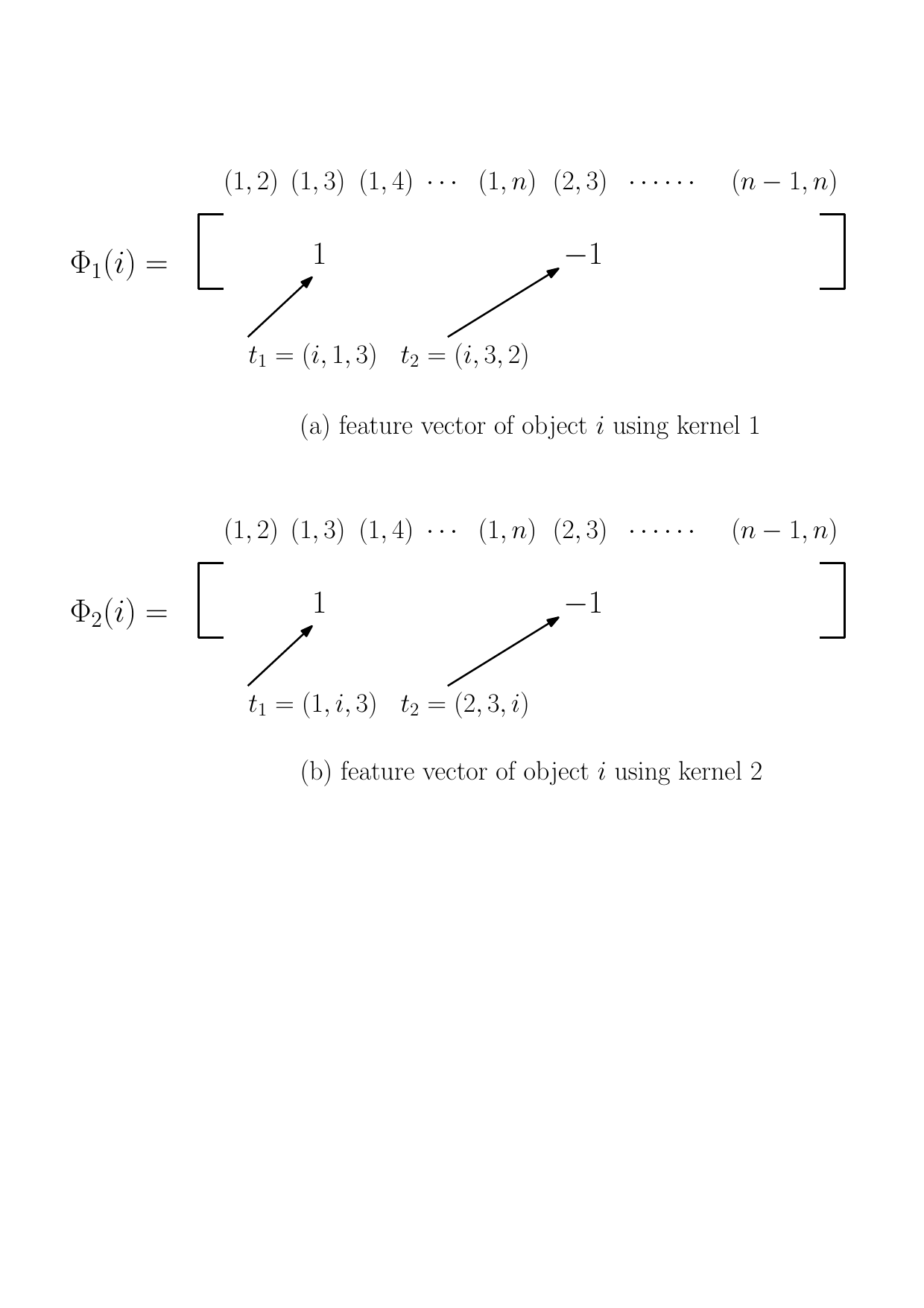}
	\caption{$\vec{c}_1$ is more similar to $\vec{c}_2$ than to $\vec{c}_3$ (by vehicle use).
}
	\label{fig:3cars3triplets}
\end{figure}

A similarity triplet $(\vec{x}, \vec{y}, \vec{z})$ encodes ordering among the three pairwise distances between three objects $\vec{x},\vec{y},$ and $\vec{z}$. Given a set ${\mathcal X}$ of abstractly described $n$ objects, the pairwise distance information is provided by a collection of similarity triplets, ${\mathcal T}$. To analyze such data, the traditional approach first represents ${\mathcal X}$ as points in Euclidean space (ordinal embedding) that {\em preserves} the distance information in the triplets~\cite{tamuz2011adaptively,AmidVW16,NIPS2016_6554}. Next, appropriate vector-space-based algorithms are employed to solve the analytical problem at hand.

The kernel-based machine learning approach maps the objects in $\cX$ to feature vectors based on $\cT$. A kernel function between two objects, which is usually a dot-product between the corresponding feature vectors, serves as the pairwise similarity value~\cite{Kleindessner2017Kernel}. This approach differs from ordinal embedding in that it does not try to satisfy some global fit to the data but instead simply represents the object itself. These feature vectors are high-dimensional which makes kernel evaluation computationally infeasible. Computing the kernel with the best-known matrix multiplication algorithm takes $\Omega(n^{3.376})$ time, which clearly does not scale to a number of objects more than a few thousand~\cite {Kleindessner2017Kernel}. 

In this paper, we propose methods for data analysis directly on triplets, thus avoiding kernel computation. The proposed method has linear complexity in the number of triplets and is highly scalable and parallelizable. As a result, we overcome the significant challenges of computational complexity and scalability of kernel methods in big data.  Empirical evaluation shows that our kernel-free clustering and classification algorithms outperform competitor methods. We also propose a data augmentation scheme that infers more sound triplets derived from input data without the need for expensive data gathering from human judges. Our data augmentation method can also help identify data inconsistencies in the input and it can lead to better data collection strategies; futhermore, it is of independent interest in crowdsourcing settings. Since the data provided by humans are prone to data veracity issues, we use the most realistic \emph{relative noise model} to introduce noisy triplets in the input and evaluate if our methods are robust to error.

The main contributions of this work are as follows:

\begin{itemize}
	
	\item We compute the \textit{closeness} of objects from fixed objects, directly from triplets. This closeness is used to find objects' nearest neighbors for classification and clustering. We show that classification and clustering algorithms based on closeness perform better than the baseline methods.

	\item We propose a robust and efficient method for augmentation of triplets to add more sound triplets to input data. The benefits of augmentation are two-fold: (i) reduction in the cost of collecting triplets, and (ii) improved quality of kernel-based and kernel-independent data analytics. 
	\item We provide linear-time algorithms both for data augmentation and data analysis from triplets.
	\item We use \emph{relative noise model} to introduce erroneous triplets in the input. The experimental results show that our methods are robust to error, and our algorithms retain their quality to a greater extent with increasing error in input.
\end{itemize}

The rest of the paper is organized as follows. 
The related work is discussed in Section~\ref{section:related_work}. 
We formulate the problem in Section~\ref{section:problem_formulation} and discuss our feature vector representation method in Section~\ref{section:algorithm}. 
We further discuss the proposed triplet-based method for data analysis in Section~\ref{sec_triplets_analysis}.
We provide details of  dataset statistics and baseline methods in  Section~\ref{section:experimental_evaluation}. 
We report experimental results and comparisons with the baselines in Section~\ref{sec_results_discussion}. 
Finally, we conclude the paper in Section~\ref{section:conclusion}.

\section{Related work}\label{section:related_work}

Evaluating proximity between pairs of objects is a fundamental building block in machine learning and data analysis. Since comparing two distances is fundamentally easier than computing actual distances~\cite{stewart2005absolute}, recent works utilize {\em `relative similarity'} among objects rather than working on actual pairwise similarities.

Relative similarities among objects are used to generate ordinal embedding -- a representation that preserves the orderings of distances among pairs of objects~\cite{AmidU15,HeimBSH15,tamuz2011adaptively}. This representation, usually in a low-dimensional Euclidean space, is used to solve the specific machine learning problem at hand. For instance, low-rank embedding based on convex optimization is detailed in~\cite{AgarwalWCLKB07}. Ordinal embedding with the objective based on the number of constraint violations is discussed in~\cite{TeradaL14}. In this approach, the quantity and quality of triplets are of pivotal importance to extract meaningful information~\cite{arias2017some}. A lower bound of $\omega(n\log n)$ is derived in~\cite{jamieson2011low} to get useful knowledge from triplets. Methods to learn embedding with bounded errors from noisy similarity information are proposed in~\cite{AmidVW16,NIPS2016_6554,Kleindessner2017lens}, while~\cite{wilber2014cost} presents techniques to get higher quality triplets via crowd-sourcing.

As an alternative to ordinal embeddings, kernel functions based on similarity triplets are proposed in~\cite{Kleindessner2017Kernel}. The bottleneck, however, is the kernel computation through matrix multiplication which takes time proportional to $O(n^{3.376})$ in the dense cases. An approximation of multidimensional scaling, called Landmark technique~\cite{de2004sparse}, reduces the search space to a chosen useful subset of data points. The time complexity of kernel computation is still asymptotically super-cubic, making the kernel-based approach infeasible for big data.

Some recent works perform data analytics directly on triplets without computing the kernels. An approximate median of the dataset using triplets is proposed in~\cite{heikinheimo2013crowd}. Algorithms to estimate density using relative similarity are provided in~\cite{ukkonen2015crowdsourced}.  Similarly, triplets are used to find approximate median and outliers in a dataset~\cite{Kleindessner2017lens}. Approximate nearest neighbors-based clustering and classification algorithms are also proposed in~\cite{Kleindessner2017lens}. Another set of research works perform machine learning directly based on comparison trees~\cite{haghiri2017ComparisonBasedNearestNeighbor,HaghiriGL18,PerrotL19}. Active triplets generation mechanism is used in~\cite{tamuz2011adaptively,haghiri2017ComparisonBasedNearestNeighbor,HaghiriGL18} in which the triplets of the desired choice are queried. Machine learning tasks performed directly on triplets include nearest neighbors search~\cite{haghiri2017ComparisonBasedNearestNeighbor}, classification ~\cite{PerrotL19,HaghiriGL18}, comparison based hierarchical clustering~\cite{ghoshdastidar2018foundations}, and correlation clustering~\cite{ukkonen2017crowdsourced}.

Data gathering in this setting is expensive as it requires human judges to order objects in a triplet~\cite{GomesWKP11,ParameswaranGPPRW12}. This leads to reduced quality in inferences drawn from limited information. Dense feature vectors and kernels based on them enhance the quality of machine learning algorithms. Furthermore, dense feature vectors can lead to approximation algorithms for computing kernels with quality guarantees. For literature on large scale kernel learning~\cite{BojarskiCCFGMSS17,sun2018but} and kernel approximation~\cite{FarhanNIPS2017,LiTOS19} see references therein. Authors in~\cite{li2016extracting} demonstrate that more noise and inconsistencies in human responses regarding the pairwise rating of data could corrupt similarity judgments, both within- and across subjects, which could badly affect the overall data interpretation.


\section{Problem Formulation}\label{section:problem_formulation}
Let ${\mathcal X} = \{x_1,\ldots, x_n\}$ be a set of $n$ objects in an arbitrary but fixed order, and let $d:{\mathcal X}\times {\mathcal X} \mapsto \mathbb{R}^+\cup \{0\}$ be a distance measure on $\cX$. We do not require $d$ to be a metric and only assume that $\forall \, \vec{x},\vec{y}\in {\mathcal X}\,:\,$
\begin{equation*}
d(\vec{x},\vec{y})\geq 0, \;\;
d(\vec{x},\vec{y}) = 0 \leftrightarrow \vec{x} = \vec{y},\text{ and } \;
d(\vec{x},\vec{y}) = d(\vec{y},\vec{x})
\end{equation*}
The distance measure $d$ or the corresponding similarity measure $sim$ (an inverse of $d$) is not provided. A similarity triplet $(\vec{x}, \vec{y}, \vec{z})$ encodes ordinal information about distances between three objects $\vec{x},\vec{y},$ and $\vec{z}$. The ordering information in a triplet can be of three different forms~\cite{Kleindessner2017lens}, as described in Table~\ref{tbl:tripletDefinitions}.
\begin{table}[h!]
\centering
\begin{tabular}{cll}
\toprule
Type & Notation & Description \& Definition \\ \midrule
\namedlabel{tripAnchor}{{$\pmb{\mathbb{A}}$}}: &
$(\vec{x},\vec{y},\vec{z})_A$ & \TwoRows{$\vec{x} \text{ is closer to } \vec{y} \text{ than to } \vec{z}$}{$d(\vec{x}, \vec{y}) < d(\vec{x}, \vec{z}) $} \\
\namedlabel{tripCentral}{{$\pmb{\mathbb{C}}$}}: &
$(\vec{x},\vec{y},\vec{z})_C$ & \TwoRows{$\vec{x}$ is the central element}{$d(\vec{x},\vec{y}) < d(\vec{y},\vec{z})$  and $d(\vec{x},\vec{z}) < d(\vec{y},\vec{z})$} \\
\namedlabel{tripOutlier}{{$\pmb{\mathbb{O}}$}}: &
$(\vec{x},\vec{y},\vec{z})_O$ & \TwoRows{$\vec{x}$ is the outlier}{ $ d(\vec{x},\vec{y}) > d(\vec{y},\vec{z})$ and $d(\vec{x},\vec{z}) > d(\vec{y},\vec{z})$}\\
\bottomrule
\end{tabular}
	\caption{Definitions of three types of triplets encoding different ordinal information about pairwise distances.} \label{tbl:tripletDefinitions}
\end{table}

The only information about $d$ or $sim$ is provided as a collection $\mathcal{T}$ of triplets of the type~\ref{tripAnchor},~\ref{tripCentral}, or~\ref{tripOutlier}. An illustration of three forms of triplets is shown in Figure~\ref{center_and_outlier_triplet_graphically}. From the three images of vehicles in Figure~\ref{fig:3cars3triplets}, the following three triplets can be inferred, $(\vec{c}_1,\vec{c}_2,\vec{c}_3)_A$, $(\vec{c}_2,\vec{c}_1,\vec{c}_3)_C$, and $(\vec{c}_3,\vec{c}_1,\vec{c}_2)_O$.
Observe that a triplet of type~\ref{tripCentral} or~\ref{tripOutlier} provides relative orderings of three pairs of objects and is stronger than that of the type~\ref{tripAnchor}, which provides relative orderings of two pairs. More formally,
\begin{align}
\label{tripCent2Anchor} \nonumber (\vec{x},\vec{y},\vec{z})_C &\iff d(\vec{x},\vec{y}) < d(\vec{y},\vec{z}) \wedge d(\vec{x},\vec{z}) < d(\vec{y},\vec{z}) \\&\iff (\vec{y},\vec{x},\vec{z})_A \wedge (\vec{z},\vec{x},\vec{y})_A
\end{align}
Similarly, a type~\ref{tripOutlier} triplet is equivalent to two type~\ref{tripAnchor} triplets.
\begin{align}
\label{tripOut2Anchor} \nonumber (\vec{x},\vec{y},\vec{z})_O &\iff d(\vec{x},\vec{y}) > d(\vec{y},\vec{z}) \wedge d(\vec{x},\vec{z}) > d(\vec{y},\vec{z}) \\& \iff (\vec{y},\vec{z},\vec{x})_A \wedge (\vec{z},\vec{y},\vec{x})_A
\end{align}

In this paper, we focus on type~\ref{tripAnchor} triplets, and when input $\cT$ is of the form~\ref{tripCentral} or~\ref{tripOutlier}, we translate it to a collection of triplets of the type~\ref{tripAnchor} using Equation  \eqref{tripCent2Anchor} or \eqref{tripOut2Anchor}. For notational convenience, we still refer to input as $\cT$.

\begin{figure}[!t]
	\centering
	\includegraphics[width=.85\linewidth,page=7]{Figures/featureVector.pdf}
	\caption{Illustration of relative locations of the three points corresponding to each of the three types of triplets.}
	\label{center_and_outlier_triplet_graphically}
\end{figure}

We refer to the set of all ordered pairs of $\mathcal{X}$ as ${\binom{\mathcal{X}}{2}}$, i.e. $\mathcal{X} = \{(x_i,x_j): x_i,x_j \in \mathcal{X}, i<j\}$. Two alternative mappings of objects in $\mathcal{X}$ to feature vectors in $\{-1,0,1\}^{\binom{n}{2}}$ are given in~\cite{Kleindessner2017Kernel}. The coordinates of the feature vectors correspond to ordered pairs $(\vec{x}_i, \vec{x}_j) \in {\cX \choose 2}$ with $i<j$. For $\vec{x} \in \mathcal X$, the feature value at the coordinate corresponding to $(\vec{x}_i, \vec{x}_j)$ is $1$ if $(\vec{x},\vec{x}_i,\vec{x}_j)\in \mathcal{T}$, $-1$ if $(\vec{x},\vec{x}_j,\vec{x}_i)\in \mathcal{T}$, and $0$ otherwise. More precisely, $\vec{x}\in {\mathcal X}$ is mapped to $\Phi_1(\vec{x})$ as follows:

\begin{equation*}
\Phi_{1}(\vec{x}) =  \bigg( \Phi_{1}(\vec{x})[\gamma]\bigg)_{\text{\footnotesize $\gamma \in  \binom{\mathcal X}{2}$}}, 
\end{equation*}

\begin{equation}
\label{Eq:featuremap}
 \Phi_{1}(\vec{x})[\gamma] = \begin{cases} 1 & \text{if } \gamma = (\vec{x}_i,\vec{x}_j), i< j , (\vec{x},\vec{x}_i,\vec{x}_j) \in {\mathcal T}\\
-1& \text{if } \gamma = (\vec{x}_i,\vec{x}_j), i< j , (\vec{x},\vec{x}_j,\vec{x}_i) \in {\mathcal T}\\
0 & \text{otherwise}
\end{cases}
\end{equation}

In the second feature mapping, $x \in \mathcal{X}$ is mapped to $\Phi_2(x) \in \{-1,0,1\}^{{n\choose 2}}$ as follows:
\begin{equation}
\label{Eq:featuremap2}
\Phi_{2}(\vec{x})[\gamma] = \begin{cases} 1 & \text{ if } \gamma = (\vec{x}_i,\vec{x}_j), i< j , (\vec{x}_i,\vec{x},\vec{x}_j) \in {\mathcal T}\\
-1& \text{ if } \gamma = (\vec{x}_i,\vec{x}_j), i< j , (\vec{x}_i,\vec{x}_j,\vec{x}) \in {\mathcal T}\\
0 & \text{ otherwise }
\end{cases}
\end{equation}

\begin{figure}[!t]
	\centering
	\includegraphics[scale=.55,page=8]{Figures/featureVector.pdf}
	
	\caption{$\Phi_1$ and $\Phi_2$ for triplets $\mathcal{T} = \{(\vec{x}_{1}, \vec{x}_{2}, \vec{x}_{3})_A$, $(\vec{x}_{1}, \vec{x}_{4}, \vec{x}_{2})_A$, $(\vec{x}_{2}, \vec{x}_{3}, \vec{x}_{1})_A$, $(\vec{x}_{2}, \vec{x}_{1}, \vec{x}_{4})_A\}$ using \eqref{Eq:featuremap} and \eqref{Eq:featuremap2}.}
	\label{fig:featureVector}
\end{figure}

Figure~\ref{fig:featureVector} shows the feature mapping on a small example. For both feature mappings, the kernel is given as~\cite{Kleindessner2017Kernel}:
\begin{equation}
K_{r}(\vec{x},\vec{y}|{\mathcal{T}}) =  \left\langle \Phi_r(\vec{x}),\Phi_r(\vec{y})\right\rangle
=  \mathclap{\;\; \sum_{\gamma \in {\binom{\mathcal{X}}{2}}}} \; \; \; \; \Phi_r(\vec{x})[\gamma] \Phi_r(\vec{y})[\gamma]
\label{Eq:kernel}
\end{equation}

Intuitively, the kernel value between $\vec{x}$ and $\vec{y}$, $K_1(\vec{x},\vec{y} \vert {\mathcal T})$ counts the number of pairs having same relative ordering with respect to both $\vec{x}$ and $\vec{y}$ minus those having different relative ordering. $K_2(\vec{x},\vec{y} \vert {\mathcal T})$, on the other hand, measures the similarity of $\vec{x}$ and $\vec{y}$ based on whether they rank similarly with respect to their distances from other objects.  In this work, we focus on $K_1$ and denote it by $K$. Note that all our results readily extend to $K_2$. We use $K$ to show the improvement achieved by triplets augmentation in the quality of data analytics tasks.

Given the triplets data $\cT$, our goal is to perform data analytics tasks (e.g., nearest neighbors, clustering, and classification) efficiently without using the kernel. Given that our data do not necessarily reside in a numeric feature space and the distance measure is not explicitly provided, we define centrality, median, and closeness as follows:

\begin{definition} \label{def:centrality}
	The \textbf{centrality} of an object in a dataset is how {\em close} or {\em similar} it is to all other objects.
	Centrality of $\vec{x}\in \mathcal{X}$ is defined as: $cent(\vec{x}) := \sum\limits_{\vec{y}\in {\mathcal X}} sim(\vec{x},\vec{y}) =: \sum\limits_{\vec{y}\in {\mathcal X}} -d(\vec{x},\vec{y}) $. 
\end{definition}
\begin{definition} \label{def:median} A \textbf{median} (or centroid) of the dataset is an object with the largest centrality. The median $\vec{x}_{med}$ of $\mathcal{X}$ is given by: $\vec{x}_{med} := \argmax \limits_{\vec{x}\in \mathcal{X}} \;\;cent(\vec{x})$.
\end{definition}
\begin{definition}\label{def:closeness}
	For two objects $\vec{x},\vec{y} \in \mathcal{X}$, {\bf closeness} of $\vec{y}$ to $\vec{x}$ is the rank of $\vec{y}$ by similarity to $\vec{x}$, more formally: $close_{\vec{x}}(\vec{y}) := (n - 1) -  \lvert \{\vec{z}\in \mathcal{X}, \vec{z}\neq \vec{x} : sim(\vec{x},\vec{z}) < sim(\vec{x},\vec{y}) \}\rvert$, i.e., $close_{\vec{x}}(\vec{y})$ is the index of $\vec{y}$ in the list of all objects in the decreasing order of similarity to $\vec{x}$.
\end{definition}
\begin{definition}\label{def:knn} The (ordered) set of {\bf $k$ nearest neighbors} of an object $\vec{x}\in \mathcal{X}$, $k\textsc{nn}(\vec{x})$, is denoted by $k\textsc{nn}(\vec{x}) := \{\vec{y} \;|\; close_{\vec{x}}(\vec{y}) \leq k\}$.
\end{definition}

In Section~\ref{sec_triplets_analysis}, we provide triplets-based algorithms to perform above defined analytics tasks.
To evaluate performance of our proposed methods, we introduce some noise in the triplets data to replicate the real-world scenarios. 

\subsection{Relative Noise Model}
We use \emph{relative noise model}~\cite{chumbalov2019learning} to flip a portion of input triplets to replicate the real settings, as in practical crowd-sourcing scenarios, where conflicts are generally unavoidable. In the relative noise model, a triplet can be flipped or wrong with probability proportional to the difference in distances that the triplet encodes. 

Given an anchor object $\vec{x}$ and two query points $\vec{y}$ and $\vec{z}$, if $\vec{y}$ and $\vec{z}$ are equidistant from $\vec{x}$, declaring any of them to be closer to $\vec{x}$ corresponds to a fair coin toss with equal chances of $(\vec{x},\vec{y},\vec{z})_A$ or $(\vec{x},\vec{z},\vec{y})_A$. As the difference in the distances of $\vec{y}$ and $\vec{z}$ from $\vec{x}$ increases, it is less likely to make such an error in perception. Geometrically, suppose $\cX \subset \mathbb{R}^t$, i.e., all objects are $t$-dimensional real vectors. Let $h_{\vec{y}\vec{z}}$ be the bisecting normal hyperplane to the line segment joining $\vec{y}$ and $\vec{z}$. Also, suppose that the coefficients of $h_{\vec{y}\vec{z}}$ are $a_{\vec{y}\vec{z}_1}, a_{\vec{y}\vec{z}_t},$ and $b_{\vec{y}\vec{z}}$. Any point $\vec{x}$ on $h_{\vec{y}\vec{z}}$ is equidistant from the query points, and the distance comparisons can give $(\vec{x},\vec{y},\vec{z})_A$ or $(\vec{x},\vec{z},\vec{y})_A$ with equal probability. If the anchor object $\vec{x}$ is in the lower halfspace (i.e., $\sum_{i=1}^t a_{\vec{y}\vec{z}_i}\vec{x}_i < b_{\vec{y}\vec{z}}$), then $\vec{y}$ is closer to $\vec{x}$ than $\vec{z}$ (i.e., $(\vec{x},\vec{y},\vec{z})_A$ is the correct triplet) and vice-versa (see Figure~\ref{fig:bisectinghyperplane}). In the probabilistic model, random white noise is introduced in distance comparison. Formally, the point $\vec{y}$ is declared to be closer to $\vec{x}$ than $\vec{z}$ with probability equal to $Pr[\sum_{i=1}^t a_{\vec{y}\vec{z}_i}\vec{x}_i + \epsilon < b_{\vec{y}\vec{z}}]$ and vice-versa, where $\epsilon \sim {\cal N}(0,\sigma_{\epsilon}^2)$ is the random noise. However, above model requires objects to be $\mathbb{R}^t$ and the coordinates are known. In our experiments, we set the probability of flipping a correct triplet $(\vec{x},\vec{y},\vec{z})_A$ to be equal to $1-|d'(\vec{x},\vec{y}) - d'(\vec{x},\vec{z})|$, where $d'(\cdot,\cdot)$ are the pairwise distances scaled to $[0,1]$. 

\section{Embedding Generation}\label{section:algorithm}
This section describes the representation of feature vector $\Phi_{1}$ that enables applications of set operations for kernel evaluation, efficient data augmentation, and computation of closeness from an object $\vec{x}$ and its approximate nearest neighbors.

For evaluating the centrality of objects and finding the median of the dataset, we give an abstract representation of similarity triplets in $\mathcal{T}$ (i.e., statements of type~\ref{tripAnchor}). This facilitates performing these analytics in linear time and space (i.e., linear in $|\mathcal{T}|$). Suppose that triplets in $\mathcal{T}$ are lexicographically sorted and let $|\mathcal{T}|=\tau$. This does not result in any loss of generality as $\mathcal{T}$ can be sorted as a preprocessing step in $O(\tau log \tau)$ time.

\subsection{Feature Vector Representation:} For a triplet $(\vec{x},\vec{y},\vec{z})_A$, we refer to $\vec{x}$ as the anchor of the triplet. For each $\vec{x}_i \in \mathcal{X}$ and $\vec{x}_i$ as the anchor, information in the triplets in $\mathcal{T}$ is encoded in a directed graph $G_i$ on the vertex set $\mathcal{X}$. The set of directed edges $E(G_i)$ in $G_i$ is defined as:
$$E(G_i) := \{(\vec{y},\vec{z})| \vec{y},\vec{z} \in \mathcal{X},\; (\vec{x}_i,\vec{y},\vec{z})_A \in \mathcal{T}\}$$

Note that edges directed from lower indexed to higher indexed objects in $G_i$ correspond to coordinates of $\Phi_1(\vec{x}_i)$ with values $1$ and $-1$ otherwise. The feature vector $\Phi_1(\vec{x}_i)$ given in \eqref{Eq:featuremap} is represented by $G_i$ as follows:
$$\Phi_{1}(\vec{x}_i)[\gamma] = \begin{cases} 1 & \text{ iff } \gamma = (\vec{x}_j,\vec{x}_k) \text{ and } (\vec{x}_j,\vec{x}_k) \in E(G_i)\\
-1& \text{ iff } \gamma = (\vec{x}_j,\vec{x}_k) \text{ and }  (\vec{x}_k,\vec{x}_j) \in E(G_i)\\
0 & \text{ otherwise }
\end{cases}$$

\begin{lemma}
	{For $1\leq i \leq n$, $G_i$ is a directed acyclic graph (DAG).}
\end{lemma}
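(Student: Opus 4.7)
The plan is to argue by contradiction using the fact that every directed edge in $G_i$ encodes a strict inequality of real distances from the anchor $x_i$, and strict $<$ on $\mathbb{R}$ is transitive and irreflexive.

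More concretely, I would first restate the semantics of an edge: by construction, $(y,z) \in E(G_i)$ holds only when $(x_i,y,z)_A \in \cT$, which by the Type $\ref{tripAnchor}$ definition in Table \ref{tbl:tripletDefinitions} is exactly the statement $d(x_i,y) < d(x_i,z)$. Thus every directed edge in $G_i$ certifies a strict comparison of two nonnegative real numbers indexed by the head and tail relative to the same anchor $x_i$.

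Next, suppose toward contradiction that $G_i$ contains a directed cycle $y_1 \to y_2 \to \cdots \to y_k \to y_1$ with $k \geq 2$. Applying the edge-semantics to each consecutive edge yields the chain
\begin{equation*}
d(x_i,y_1) < d(x_i,y_2) < \cdots < d(x_i,y_k) < d(x_i,y_1),
\end{equation*}
and transitivity of $<$ on $\mathbb{R}$ gives $d(x_i,y_1) < d(x_i,y_1)$, contradicting irreflexivity. Hence no directed cycle can exist, and $G_i$ is a DAG.

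I do not anticipate any real obstacle; the statement is essentially a bookkeeping consequence of the Type $\ref{tripAnchor}$ definition. The only thing worth flagging is that the argument silently uses soundness of the input triplets, i.e., that every triplet in $\cT$ genuinely reflects the underlying (unknown) distance $d$. In the noisy or human-sourced setting mentioned later in the paper, this soundness may fail and cycles can appear; this is exactly the inconsistency-detection phenomenon the authors allude to when they write that augmentation ``reveals hidden conflicts in the data,'' so the lemma should be read under the standing assumption that $\cT$ is consistent with some underlying $d$.
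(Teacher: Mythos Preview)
Your proof is correct and follows essentially the same approach as the paper: both argue by contradiction, chaining the strict inequalities $d(x_i,\cdot)<d(x_i,\cdot)$ along a putative cycle and invoking transitivity to reach an irreflexivity contradiction. Your presentation is arguably a touch cleaner (directly positing a cycle rather than framing it as ``adding an edge creates a cycle''), and your remark about the implicit soundness assumption is apt and matches what the paper later states explicitly.
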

\begin{proof}
	{Suppose, $G_i$ contains $t$ edges with no cycle and adding the new edge $(\vec{x},\vec{y})$ creates a cycle in $G_i$. A cycle can only be created by $(\vec{x},\vec{y})$ in $G_i$ only if there is already a directed path from $\vec{y}$ to $\vec{x}$. Let the directed path from $\vec{y}$ to $\vec{x}$ be the form of $\vec{y},\vec{z}_1,\vec{z}_2,...,\vec{z}_k,\vec{x}$. The path implies that $d(\vec{x}_i,\vec{y})<d(\vec{x}_i,\vec{z}_1)<d(\vec{x}_i,\vec{z}_2),...,d(\vec{x}_i,\vec{z}_k)<d(\vec{x}_i,\vec{x})$. By transitivity, we have $d(\vec{x}_i,\vec{y})<d(\vec{x}_i,\vec{x})$. However, the edge $(\vec{x},\vec{y})$ contradicts the inequality $d(\vec{x}_i,\vec{y})<d(\vec{x}_i,\vec{x})$. As in our setting, each distance query for a pair of objects gives exactly the same answer each time, the edge $(\vec{x},\vec{y})$ can not exist. This confirms the statement that directed graph $G_i$ made from triplets set with an anchor $\vec{x}_i$ is a DAG.}
\end{proof}

\subsection{Data Augmentation} Any reasonable notion of distance (or similarity) must admit the following property. {\em If an object $\vec{a}$ is closer to $\vec{x}$ than object $\vec{b}$, and object $\vec{b}$ is closer to $\vec{x}$ than object $\vec{c}$, then object $\vec{a}$ is closer to $\vec{x}$ than $\vec{c}$}, i.e. $$d(\vec{x},\vec{a})<d(\vec{x},\vec{b}) \wedge d(\vec{x},\vec{b}) < d(\vec{x},\vec{c}) \implies d(\vec{x},\vec{a}) < d(\vec{x},\vec{c})$$ In other words, $1\leq i \leq n$, edges of $G_i$ must induce a transitive relation on $\mathcal{X}$. We compute transitive closures of all $G_i$'s to obtain more sound triplets. In a digraph, $G = (V,E)$, for  $v \in V$, let $R(v)$ be the set of vertices reachable from $v$, i.e., vertices that have a path from $v$ in $G$. $R(v) = N^{+}(v) \bigcup_{u\in N^+(v)} R(u)$, where $N^+(v)$ is the set of out-neighbors of $v$. {Algorithm~\ref{findReachabilityAlgo} computes the reachability set of all vertices in a directed graph.}

\begin{algorithm}[t!]
	\caption{Augment ($DAG=(V,E)$), $|V| = n$}
	\label{findReachabilityAlgo}
	\begin{algorithmic}[1]
		\State $\mathcal{R} \gets \textsc{EmptySets}[1,\ldots,n]$ \vskip.03in
		\State $G_T \gets $ \Call{TopologicalSort}{$DAG$} \vskip.03in
		\For{ each node $ v_i \in V(G_T)$} \vskip.03in
		\State $\mathcal{R}[v_i] \gets  \Call{Rec-Reachability}{v_i}$  \vskip.03in
		\EndFor\vskip.03in
		\State $\left[V(G^*), E(G^*)\right] \gets \left[V(G_T)\,,\,\emptyset\,\right]$ \vskip.03in
		\State $E(G^*) \gets \textsc{adj-list}[v_i] = {\cal R}[v_i]$\vskip.03in
		\State \Return $G^*$ \vskip.03in
		\Function{Reachability}{node $v$}\vskip.03in
		\If {$\mathcal{R}[v]=\emptyset$} \vskip.03in
		\If {$ \vert N^{+}(v) \vert=0$}\vskip.03in
		\State \Return
		\Else 
		\State $\mathcal{R}[v] \gets \bigcup\limits_{v_j \in N^{+}(v)} \left\{{v_j}\cup \Call{Reachability}{v_j}\right\}$
		\EndIf
		\EndIf
		\State \Return $\mathcal{R}[v]$
		\EndFunction
	\end{algorithmic}
\end{algorithm}

\begin{definition}
	The \textbf{transitive closure} of a digraph $G$ is the graph $G^*$ that contains the edge $(u,v)$ if there is a path from $u$ to $v$ (see Figure~\ref{fig:transitivity}). More formally, $G^*$ is a digraph such that $V(G^*) = V(G)$ and $E(G^*) = \{(u,v):v\in R(u) \}$.\end{definition} \vskip-.05in
	
\begin{figure}[t!]
  \centering
          \includegraphics[scale = 0.8] {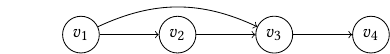}
\vskip.1in
        \includegraphics[scale = 0.8] {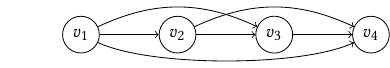}
        
    \caption{A graph $G$ (top) with its transitive closure $G^*$ (bottom).} 
	\label{fig:transitivity}
\end{figure}

\begin{lemma}
	The overall runtime of calling Algorithm~\ref{findReachabilityAlgo} on each of the $n$ $G_i$'s is $O(n^2 + |\cT^*|)$.
\end{lemma}
{\bf Proof:} Topological sorting of a DAG $G_i = (V_i,E_i)$ can be accomplished in $O(|V_i| + |E_i|)$~\cite{Cormen2009Algorithm}. The loop on line 3 iterates over each node, and the union operation in $\textsc{Reachability}({v_j})$ takes $O(deg^+(v_j))$ time. Thus, the runtime of one call to Algorithm~\ref{findReachabilityAlgo} takes $O(n+ \sum_j deg^+(v_j))$ time. Therefore, the total runtime of calling Algorithm~\ref{findReachabilityAlgo} on all $n$ graphs $G_i$, takes $O(n^2 + \sum_i |E(G_i)|) =  n^2 + \tau$. The time for constructing the augmented graphs in total takes time equal to the number of triplets in $\cT^*$. We get the total runtime of data augmentation as $O(n^2 + |\cT^*|)$. 
Note that the dimensionality of each feature vector is $O(n^2)$ and $\cT \subset {n\choose 3}$. In real settings, $|\cT|$ is likely to be significantly bigger than $O(n^2)$.

There are several benefits of this approach to data augmentation. First, having more data can lead to better knowledge extraction and machine learning. This is particularly true for data-intensive algorithms in deep learning. Second,  actively generating triplets that give us meaningful and novel information. Thus, in a crowdsourcing setting, where human annotators are actively queried, the number of queries (and their associated costs) can be minimized by only focusing on meaningful queries. Third, it can help in the identification of non-obvious conflicts in ordinal data provided in triplets, i.e., erroneous triplets, as described in an example below. 

For example consider a set of triplets $\cT = \{(\vec{x}_{i}, \vec{x}_{1}, \vec{x}_{2}),$ $(\vec{x}_{i}, \vec{x}_{2}, \vec{x}_{3}),$ $(\vec{x}_{i}, \vec{x}_{3}, \vec{x}_{5}),$ $(\vec{x}_{i}, \vec{x}_{5}, \vec{x}_{2}),$ $(\vec{x}_{i}, \vec{x}_{5}, \vec{x}_{4}) \}$. The corresponding $G_i$ is a conflict-free graph. However, the augmentation reveals all the indirect dependencies among nodes also. Augmenting $G_i$ yields triplet set as $\cT^* = \{(\vec{x}_{i}, \vec{x}_{1}, \vec{x}_{2}),$ $(\vec{x}_{i}, \vec{x}_{1}, \vec{x}_{3}),$ $(\vec{x}_{i}, \vec{x}_{2}, \vec{x}_{3}),$ $(\vec{x}_{i}, \vec{x}_{2}, \vec{x}_{5}),$ $(\vec{x}_{i}, \vec{x}_{3}, \vec{x}_{5}),$ $ (\vec{x}_{i}, \vec{x}_{5}, \vec{x}_{2}),$ $(\vec{x}_{i}, \vec{x}_{5}, \vec{x}_{3}), (\vec{x}_{i}, \vec{x}_{5}, \vec{x}_{4}) \}$. It is clear that we have two pairs of conflicting triplets $\{(\vec{x}_{i}, \vec{x}_{2}, \vec{x}_{5}),$ $(\vec{x}_{i}, \vec{x}_{5}, \vec{x}_{2})$ and $\{ (\vec{x}_{i}, \vec{x}_{3}, \vec{x}_{5}),$  $(\vec{x}_{i}, \vec{x}_{5}, \vec{x}_{3})\}$. The pictorial representation of $\cT$ and $\cT^*$ is shown in Figure~\ref{fig:hidden_conflict_detection_2}. Conflicting triplets can be dealt with in many different ways depending on the specific area of application. For instance, one can remove both or either of the triplets from the dataset based on user-defined criteria. 

\begin{figure}[t!]
	\centering
	\includegraphics[width=.7\linewidth,page=11]{Figures/featureVector.pdf}
	\caption{Hidden conflict detection using augmentation. A graph $G_i$ and the augmented graph $G_i^*$ (bottom). Pairs of dashed and dotted edges in $G_i^*$ are conflicting edges not obvious in $G_i$. }
	\label{fig:hidden_conflict_detection_2}
\end{figure}

In the case of noisy input data, directed graphs corresponding to anchor objects may have cycles, and algorithms to find transitive closure (augmentation) of directed graphs can be applied in such scenarios. The augmentation of directed graphs in such scenarios also leads to the propagation of errors in the graph. However, in the case of the relative noise model, the error remains much contained after augmentation as compared to the random noise model introduced in~\cite{Kleindessner2017lens,PerrotL19}. Results in section~\ref{section:experimental_evaluation} show that our methods for data analytics tasks are robust to error and can also be applied to directed graphs.

\begin{figure}[!t]
	\centering
	\includegraphics[width=.5\linewidth,page=9]{Figures/featureVector.pdf}
	\caption{Distance comparison of two objects $\vec{y}$ and $\vec{z}$ from anchor points $\vec{x}_1,\vec{x}_2,$ and $\vec{x}_3$. The distance comparisons from anchor points that are close to the bisecting hyperplane $h_{\vec{y}\vec{z}}$ are more difficult to answer and more prone to error.}
	\label{fig:bisectinghyperplane}
\end{figure}

\section{Triplets based Data Analysis}\label{sec_triplets_analysis}
In this section, we propose algorithms for common data analysis tasks based on (i) kernel, (ii) directly on the input collection of triplets, and (iii) the augmented set of triplets.
First, we give an algorithm to compute the centrality of objects and median of the dataset from kernel to demonstrate that kernel computed from the augmented triplets significantly outperforms that based only on input triplets.  
Next, we described the \emph{kernel-free} data analysis based on both input and augmented triplets. 
Our algorithm estimates the nearest neighbors for each object directly from our representation of triplets, thus, circumventing the need for kernel computations. 
The approximate $k$-nearest neighbors are then used to populate the $k$-NN graph, on which we perform spectral clustering to cluster objects. For classification, we use the standard $k$-NN classifier.

\subsection{Centrality and Median Computation} 

We define an approximate centrality measure for object $\vec{x}$, $cent'(\vec{x})$ to quantify how close $\vec{x}$ is to other objects. Let $H$ be a ${n \times n}$ matrix with a row and column corresponding to each object in $\cX$. Let the similarity rank of $\vec{y}$ in row $\vec{x}$ of $K$ be $rank_{\vec{x}}(\vec{y}) = (n - 1) -  \lvert \{\vec{z}\in \mathcal{X}, : K(\vec{x},\vec{z}) < K(\vec{x},\vec{y}) \}\rvert $. Note that this is very similar to $close_{\vec{x}}(\vec{y})$, but it is based on $K$, instead of ${\cal S}$. The object closest to $\vec{x}$ has rank $1$, and the object farthest from $\vec{x}$ has rank $(n-1)$. The row in $H$ corresponding to $\vec{x}$ contains the similarity ranks of objects based on the $\vec{x}$th row of $K$. In other words, corresponding to each row in $K$, we have a permutation of integers in the range $[0,n-1]$ in $H$. Our approximate centrality score of object $\vec{x}$ is the $\ell_p$-norm of the $\vec{x}$th column of $H$. Formally, $$cent'(\vec{x}) \; := \; \|H(:,\vec{x})\|_p \;=\; \bigg(\sum_{y} H(\vec{y},\vec{x})^p\bigg)^{\nicefrac{1}{p}},$$ i.e., $cent'(\vec{x})$ aggregates the similarity rank of $\vec{x}$ with respect to all other objects. As smaller rank values imply more similarity of $\vec{x}$ in the respective row, the object with the smallest $cent'(\cdot)$ score is the most central object, which is our approximation for the median of the dataset. 

\begin{algorithm}[t!]
	\caption{: Compute Centrality($K$)}
	\label{computeCentralityAlgo}
	\begin{algorithmic}[1]
	\State $cent' \gets \textsc{zeros}[1,\ldots,n]$
		\For{$i=1$ to $n$}
		\State $H(i,:) \gets \Call{GetRanks}{K(i,:)}$ \Comment{returns ranks of objects in $i^{th}$ row of $K$}
		\EndFor
		
		\For{$i=1$ to $n$}
		\State $cent'[i] \gets \|{H(:,i)}\|_p$ \Comment{$\ell_p$-norm of $i^{th}$ column of $H$, $p\in [1,3]$} 
		\EndFor
		\Return $cent'[\cdots]$
		
	\end{algorithmic}
\end{algorithm}

\subsection{Nearest Neighbors} 
We approximate nearest neighbors of an object from triplets only avoiding computing the kernel. Using the information in $\cT$ stored in the DAGs, $G_i$ associated with each object $\vec{x}_i$, we can find upper and lower bounds on $close_{\vec{x}_i}(\vec{y})$. Note that $\vec{y}$ is closer to $\vec{x}_i$ than all elements $\vec{z} \in R(\vec{y})$ in $G_i$. This is so because $\vec{z}\in R(\vec{y})$ implies that $d(\vec{x}_i,\vec{y}) < d(\vec{x}_i,\vec{z}) \implies sim(\vec{x}_i,\vec{y}) > sim(\vec{x}_i,\vec{z})$. Let $deg_{G_i}^+(\vec{y})$ and $deg_{G_i}^-\vec{(}y)$ denote the out-degree and in-degree of $\vec{y}$ in $G_i$, respectively. Based on the above information, we have that 
\begin{equation}\label{eq:closeLower}
close_{\vec{x}_i}(\vec{y}) \geq deg_{G_i}^+(\vec{y}).
\end{equation}  We get an upper bound on closeness of $\vec{y}$ to  $\vec{x}_i$ as: 	\begin{equation}\label{eq:closeUpper}
close_{\vec{x}_i}(\vec{y}) \leq n - deg_{G_{i}}^-(\vec{y})
\end{equation}
Combining Equations \eqref{eq:closeLower} and \eqref{eq:closeUpper} we get $$deg_{G_i}^+(\vec{y}) \leq close_{\vec{x}_i}(\vec{y}) \leq n - deg_{G_{i}}^-(\vec{y}).$$

Our approximate closeness of an object $\vec{y}$ to $\vec{x}$, $close'_{\vec{x}}(\vec{y})$ is then an average of the upper and lower bounds in Equations \eqref{eq:closeLower} and \eqref{eq:closeUpper}. The approximate {$k$-nearest neighbors} of $\vec{x}\in \mathcal{X}$, are computed based on estimated closeness $close'_{\vec{x}}(\vec{y})$, i.e.
\begin{equation}\label{eq_approx_knn}
    k\textsc{nn}'(\vec{x}) := \{\vec{y} \;|\; close'_{\vec{x}}(\vec{y}) \leq k\}
\end{equation}

Note that nearest neighbors can be approximated from the kernel matrix, but since in many practical cases we are only interested in $k\textsc{nn}(\vec{x})$ for a fixed object, computing the whole kernel matrix is unnecessary. Since degree vectors can be maintained while populating the graphs $G_i$, runtime to compute $close'_{\vec{x}}(\vec{y})$ and $k\textsc{nn}'(\vec{x})$ is $O(1)$ and $O(n)$, respectively.

\subsection{Clustering and Classification} \label{sec:clustAndClassification} 
We construct the $k$-nearest neighborhood graph $k\textsc{nng}$~\cite{paredes2006practical} for $\mathcal{X}$ using  $\mathcal{T}$. The $k\textsc{nng}$ of a dataset $\mathcal{X}$ is a graph on vertex set $\mathcal{X}$ and object $\vec{x}$ is adjacent to $k$ vertices in $k\textsc{nn}(\vec{x})$. The $k_w\textsc{nng}$ is $k\textsc{nng}$ with edge-weights proportional to closeness of the adjacent vertices (i.e. weighted version of $k\textsc{nng}$ based on neighbors distance). For clustering $\mathcal{X}$, we apply spectral clustering~\cite{vonLuxburg2007} on $k\textsc{nng}$ for $\mathcal{X}$. For constructing $k\textsc{nng}$, we use the approximate $k$ nearest neighbors  $k\textsc{nn}'(\vec{x})$ (see Equation~\eqref{eq_approx_knn}) of each object $\vec{x}$. For the classification task, the well-known nearest neighbor classification method~\cite{peterson2009k} is used by taking a majority label among the labeled points in $k\textsc{nn}'(\vec{x})$.

\section{Experimental Setup}\label{section:experimental_evaluation}

Given that the computational complexity of our approach is better than the compared existing approaches, it is easier to scale our algorithms to Big data scenarios.
Therefore, we perform a set of experiments to evaluate the quality of results generated by our proposed algorithms in comparison with existing approaches. 
We evaluate the quality of our algorithms by appropriate comparison with analytics based on the true similarity matrix of $\cX$, i.e., $\cS(i,j) = sim(\vec{x}_i,\vec{x}_j)$.
Experiments are performed on an Intel Core i$7$ system with $16$GB memory. 

\subsection{Datasets Description} 
To perform experiments, we use four existing datasets from the well-known UC Irvine Machine Learning Repository~\footnote{http://archive.ics.uci.edu/ml/index.php}. The following datasets are used to randomly generate triplets.

\begin{itemize}
	\item \textsc{zoo} dataset consists of $101$ records and each record is represented by a $16$ dimensional feature vector describing the physical features of an animal. The dataset has $7$ different classes.
	
	\item \textsc{iris} is a flower dataset containing $150$ records, with $50$ records belonging to each of $3$ classes. A record has $4$ attributes showing lengths and widths of petals and sepals. 
	
	\item \textsc{glass} dataset contains $214$ objects and each object has $9$ features. Features show the number of components used in the composition of the glass. The dataset has $7$ classes.
	
	\item \textsc{moons} is a synthetically generated dataset of $500$ points that form two interleaving half circles. Each point in the dataset is $2$-dimensional, and the dataset has $2$ classes.
\end{itemize}

For synthetic datasets, we use feature vectors to generate similarity matrix $\mathcal{S}$ and distance matrix $\mathcal{D}$. We use distance metrics that are widely adopted in the literature for the respective datasets~\cite{PerrotL19}. We use Euclidean similarity metric for \textsc{iris},\textsc{glass} and \textsc{moons} datasets and Cosine similarity metric for \textsc{zoo} dataset. We use $\mathcal{D}$ and $\mathcal{S}$ only to generate triplets and to compare the effectiveness of our method. We randomly generate triplets by comparing the distances of two objects $y$ and $z$ from an anchor object $x$. A triplet $(x,y,z)$ is obtained by comparing $d(x,y)$ and $d(x,z)$ such that $d(x,y)<d(x,z)$. We generate $ \{1,5,10,20 \}~ \%$ of total possible triplets and we introduce \emph{relative error} $=\{0, 1,5,10,20 \}~ \%$ in generated triplets in our experiments.

\subsection{Compared Algorithms}

We show that data augmentation helps in improving the quality of the kernel matrix and the analytics performed on the kernel. We perform data analytics tasks like median computation, finding approximate nearest neighbors, classification, and clustering. We compare the median results of our approach with \textsc{crowd-median}~\cite{heikinheimo2013crowd} and \textsc{lensdepth}~\cite{Kleindessner2017lens}. Clustering results are compared with \textsc{lensdepth} and we compare classification results with \textsc{lensdepth} and \textsc{tripletboost}~\cite{PerrotL19}. Note that \textsc{crowd-median} works with~\ref{tripOutlier} form triplets only and \textsc{lensdepth} works with triplets of form~\ref{tripCentral}. In experiments, while comparing with \textsc{crowd-median} and \textsc{lensdepth}, we generate triplets of form~\ref{tripOutlier} and~\ref{tripCentral} respectively and then translated them to form~\ref{tripAnchor} triplets for our methods. Note that competitor methods \textsc{lensdepth} and \textsc{tripletboost} also work with erroneous triplets, however, the \emph{random error model} used by them is different from \emph{relative error model} used in this study. So, we make comparisons with these approaches in error-free settings only, and we show the impact of the error on our methods only. We have implemented \textsc{crowd-median}~\cite{heikinheimo2013crowd} and \textsc{lensdepth}~\cite{Kleindessner2017lens} algorithms while for \textsc{tripletboost}, we use the results reported in their paper~\cite{PerrotL19}.

\section{Results and Discussion}\label{sec_results_discussion}
In this section, we report experimental results averaged over $3$ runs. The results show a similar trend for the varying value of $\tau ~\%$ and error, so we report the representative results only.

\subsection{Kernel Matrix} 
The effectiveness of kernel matrix $K$ is to what extent $K$ \textit{agrees} with $\mathcal{S}$ and how well $K$ maintains the order of objects with respect to $\mathcal{S}$. We show that the augmented kernel $K^*$ computed from $\mathcal{T}^*$ is a closer approximation to $\mathcal{S}$ compared to the kernel $K$ computed from $\mathcal{T}$. Since only the ordering of distances is important, we report the row-wise rank correlation between $K$ and $\mathcal{S}$ and that between $K^*$ and $\mathcal{S}$. Figure~\ref{fig:row_wise_avg_rank_correlation} shows corresponding means and standard deviations (SD) of row-wise rank correlations with an increasing number of triplets, showing improvement in correlations, especially for the small number of triplets. SD are too small to be seen in the reported results. 

\begin{figure}[t!]
\centering
\begin{subfigure}{.25\textwidth}
  \centering
  \includegraphics[scale=0.735]{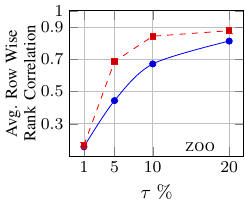}
\end{subfigure}%
\begin{subfigure}{.25\textwidth}
  \centering
  \includegraphics[scale=0.735]{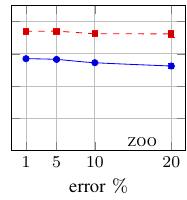}
\end{subfigure}%
\\
\begin{subfigure}{.25\textwidth}
  \centering
  \includegraphics[scale=0.735]{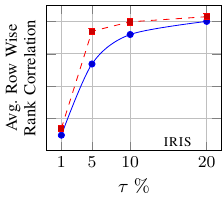}
\end{subfigure}%
\begin{subfigure}{.25\textwidth}
  \centering
  \includegraphics[scale=0.735]{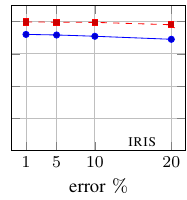}
\end{subfigure}%
\\
\begin{subfigure}{.25\textwidth}
  \centering
  \includegraphics[scale=0.735]{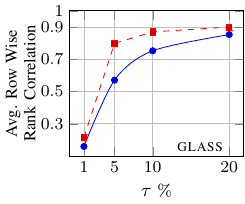}
\end{subfigure}%
\begin{subfigure}{.25\textwidth}
  \centering
  \includegraphics[scale=0.735]{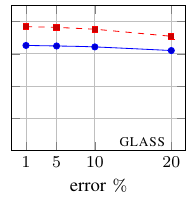}
\end{subfigure}%
\\
\begin{subfigure}{.25\textwidth}
  \centering
  \includegraphics[scale=0.735]{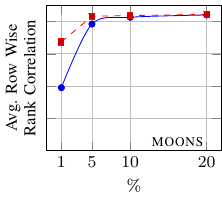}
\end{subfigure}%
\begin{subfigure}{.25\textwidth}
  \centering
  \includegraphics[scale=0.735]{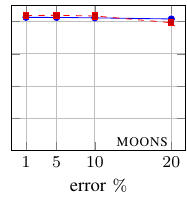}
\end{subfigure}%
\\
\begin{subfigure}{0.5\textwidth}
  \centering
  \includegraphics[scale=0.735]{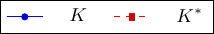}
\end{subfigure}%
\caption{Average row wise rank correlation of $K$ and $K^*$ with $\mathcal{S}$ (true similarity matrix) for different datasets. The left column shows the increase in correlation value with the increase in the number of error-free triplets. The right column shows the impact of varying error on triplets$(\tau =10 \%)$. A higher correlation value shows more agreement with $\mathcal{S}$.}
	\label{fig:row_wise_avg_rank_correlation}
\end{figure}

\begin{figure}[h!]
\centering
\begin{subfigure}{.25\textwidth}
  \centering
  \includegraphics[scale=0.735]{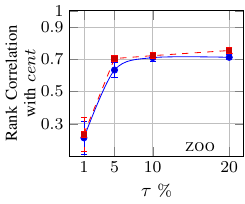}
\end{subfigure}%
\begin{subfigure}{.25\textwidth}
  \centering
  \includegraphics[scale=0.735]{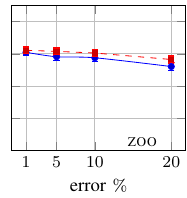}
\end{subfigure}%
\\
\begin{subfigure}{.25\textwidth}
  \centering
  \includegraphics[scale=0.735]{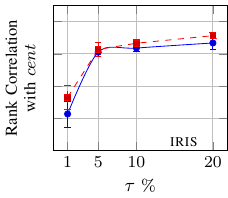}
\end{subfigure}%
\begin{subfigure}{.25\textwidth}
  \centering
  \includegraphics[scale=0.735]{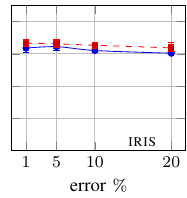}
\end{subfigure}%
\\
\begin{subfigure}{.25\textwidth}
  \centering
  \includegraphics[scale=0.735]{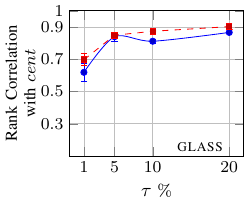}
\end{subfigure}%
\begin{subfigure}{.25\textwidth}
  \centering
  \includegraphics[scale=0.735]{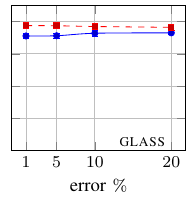}
\end{subfigure}%
\\
\begin{subfigure}{.25\textwidth}
  \centering
  \includegraphics[scale=0.735]{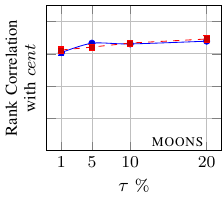}
\end{subfigure}%
\begin{subfigure}{.25\textwidth}
  \centering
  \includegraphics[scale=0.735]{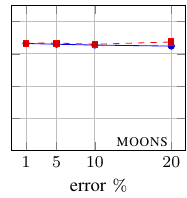}
\end{subfigure}%
\\
\begin{subfigure}{0.5\textwidth}
  \centering
  \includegraphics[scale=0.735]{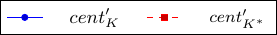}
\end{subfigure}%
\caption{Rank correlations of true and approximate centrality vectors. The $cent'_{K} $ and $cent'_{K^*}$ are centrality vectors computed from $K$ and $K^*$ respectively. The left column shows results for error-free settings with the varying number of triplets and the right column shows a slight decrease in correlation value for triplets $(\tau = 10\%)$ with the increase of error.}
	\label{fig:centrality_rank_correlation}
\end{figure}

\subsection{Centrality and Median} 
We demonstrate the quality of approximate centrality by showing the rank correlation between the true centrality vector $cent$ computed from $\mathcal{S}$ and the approximate centrality vectors ($cent'_K$ and $cent'_{K^*}$ computed from $K$ and $K^*$ respectively). In Figure~\ref{fig:centrality_rank_correlation}, the average rank correlation approaches $1$ with the increasing number of triplets and augmentation helps in improving the rank correlation in most of the cases. We use centrality vectors to compute the median of the dataset. 
\begin{figure}[h!]
\centering
\begin{subfigure}{.25\textwidth}
  \centering
  \includegraphics[scale=0.735]{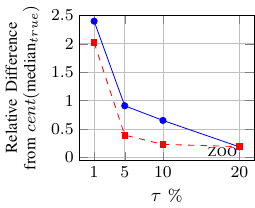}
\end{subfigure}%
\begin{subfigure}{.25\textwidth}
  \centering
  \includegraphics[scale=0.735]{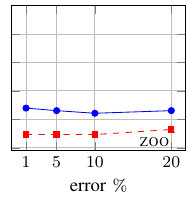}
\end{subfigure}%
\\
\begin{subfigure}{.25\textwidth}
  \centering
  \includegraphics[scale=0.735]{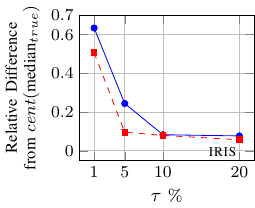}
\end{subfigure}%
\begin{subfigure}{.25\textwidth}
  \centering
  \includegraphics[scale=0.735]{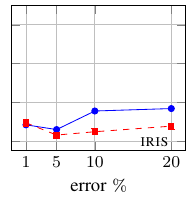}
\end{subfigure}%
\\
\begin{subfigure}{.25\textwidth}
  \centering
  \includegraphics[scale=0.735]{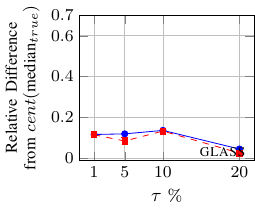}
\end{subfigure}%
\begin{subfigure}{.25\textwidth}
  \centering
  \includegraphics[scale=0.735]{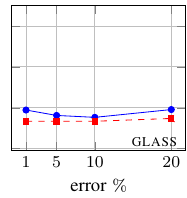}
\end{subfigure}%
\\
\begin{subfigure}{.25\textwidth}
  \centering
  \includegraphics[scale=0.735]{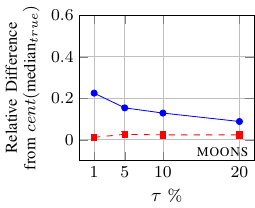}
\end{subfigure}%
\begin{subfigure}{.25\textwidth}
  \centering
  \includegraphics[scale=0.735]{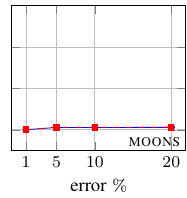}
\end{subfigure}%
\\
\begin{subfigure}{0.5\textwidth}
  \centering
  \includegraphics[scale=0.735]{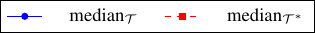}
\end{subfigure}%
\caption{Relative difference of median$_\cT$ and median$_{\cT^*}$ from the median$_{true}$. median$_{\cT^*}$ is generally closer to the median$_{true}$ compared to median$_\cT$. Plots on the left show results for triplets with $0\%$ error and plots on right show results for $\tau = 10 \%$ with varying error $\%$. Bottom left subplot is interpreted as for \textsc{moons} dataset, at $\tau\;\% = 10$, $cent($median$_\cT)$ and $cent($median$_{\cT^*})$ are $0.12$ and $0.02$ std-devs away from the median$_{true}$ respectively.}
	\label{fig:median_plot}
\end{figure}
Let median$_{true}$ be the median computed from the $cent$ and median$_{\cT}$ and median$_{\cT^*}$ are computed from the $cent'_K$ and $cent'_{K^*}$ respectively. 
To show that the approximate median lies close to median$_{true}$, we check how many standard deviations far the median$_{\cT}$ and median$_{\cT^*}$ are form the median$_{true}$. In Figure~\ref{fig:median_plot}, we report the relative difference of $cent$ value of the approximate median from the median$_{true}$ which is computed as $(cent(\text{median}_{true}) - cent(\text{median}_{\cT})) / \sigma(cent)$, where $\sigma(cent)$ is the standard deviation of $cent$ vector. 

We also compare median results with \textsc{crowd-median} algorithm. As \textsc{crowd-median} works on triplets of type~\ref{tripOutlier}, we generate~\ref{tripOutlier} type triplets and then transform them to~\ref{tripAnchor} triplets. We report the relative distance among median$_{true}$ and approximate medians which is computed as ${d(\text{median}_{true},\text{median}_{\cT})}/{\sigma(\mathcal{D})}$, where $\mathcal{D}$ is the true distance matrix. The comparison results with \textsc{crowd-median} are shown in Figure~\ref{fig:cmp_crowd_median}.

\begin{figure}[h!]
\centering
\begin{subfigure}{.25\textwidth}
  \centering
  \includegraphics[scale=0.735]{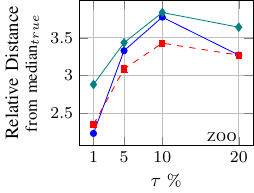}
\end{subfigure}%
\begin{subfigure}{.25\textwidth}
  \centering
  \includegraphics[scale=0.735]{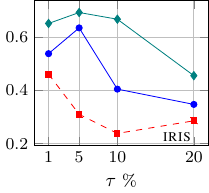}
\end{subfigure}%
\\
\begin{subfigure}{.25\textwidth}
  \centering
  \includegraphics[scale=0.735]{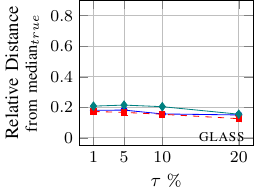}
\end{subfigure}%
\begin{subfigure}{.25\textwidth}
  \centering
  \includegraphics[scale=0.735]{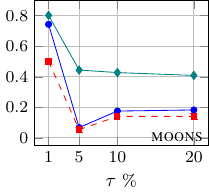}
\end{subfigure}%
\\
\begin{subfigure}{0.5\textwidth}
  \centering
  \includegraphics[scale=0.700]{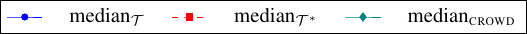}
\end{subfigure}%
\caption{Relative distance (std-devs) from median$_{true}$ of  \textsc{crowd-median} and our medians. For \textsc{crowd-median}, we generate triplets of type~\ref{tripOutlier} and translated them to type~\ref{tripAnchor} triplets. Results show that medians computed by our method are closer to the median$_{true}$ compared to median$_\textsc{crowd}$. $\tau \%$ shows the percentage of triplets of type~\ref{tripOutlier}.}
	\label{fig:cmp_crowd_median}
\end{figure}

\subsection{Nearest Neighbors} 

We show that our closeness-based method to approximately find the nearest neighbors performs well in practice. For each $x \in \mathcal{X}$, we compute true and approximate nearest neighbors denoted by $k\textsc{nn}(x)$ and $k\textsc{nn}'(x)$ respectively. To evaluate the effectiveness of our closeness based $k\textsc{nn}'(x)$, we apply standard $k$-means algorithm for clustering of all the data points in the $x^{th}$ row of similarity matrix $\mathcal{S}$. We find the \textit{closest} cluster from $x$ i.e. the cluster having maximum similarity with $x$. The similarity of cluster $C_i$ to $x$ is defined as $\dfrac{1}{|C_i|} \sum_{j\in C_i} sim(x,j)$. The performance of the proposed closeness approach is then measured by calculating the average intersection size of $k\textsc{nn}'(x)$ and the closest cluster. We make $ \lceil{\frac{n}{10}}\rceil$ clusters, where $n$ is the number of objects in the dataset. Here, the value of $\frac{n}{10}$ clusters is chosen empirically.  We report results for $k \in \{1,2\}$ in Figure~\ref{fig:knn} which shows that we achieve $60-80\%$ accuracy in finding the nearest neighbor, for the datasets used in experiments. We also observe that the closest cluster normally contains very few points, so the intersection percentage degrades with increasing $k$. Note that the closest cluster $C_i$ for each $x$ consists of true $|C_i|$ nearest neighbors of $x$. Thus, we do not report intersection results for $k\textsc{nn}(x)$ and the corresponding closest cluster.

\begin{figure}[h!]
\centering
\begin{subfigure}{.25\textwidth}
  \centering
  \includegraphics[scale=0.735]{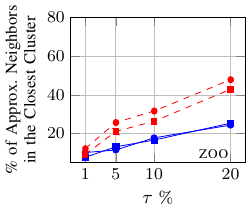}
\end{subfigure}%
\begin{subfigure}{.25\textwidth}
  \centering
  \includegraphics[scale=0.735]{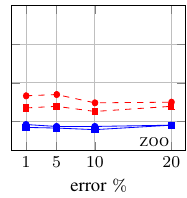}
\end{subfigure}%
\\
\begin{subfigure}{.25\textwidth}
  \centering
  \includegraphics[scale=0.735]{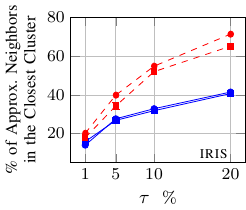}
\end{subfigure}%
\begin{subfigure}{.25\textwidth}
  \centering
  \includegraphics[scale=0.735]{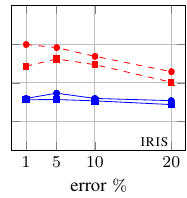}
\end{subfigure}%
\\
\begin{subfigure}{.25\textwidth}
  \centering
  \includegraphics[scale=0.735]{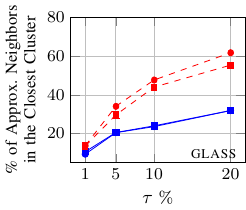}
\end{subfigure}%
\begin{subfigure}{.25\textwidth}
  \centering
  \includegraphics[scale=0.735]{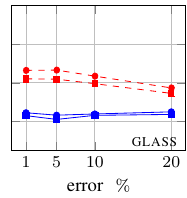}
\end{subfigure}%
\\
\begin{subfigure}{.25\textwidth}
  \centering
  \includegraphics[scale=0.735]{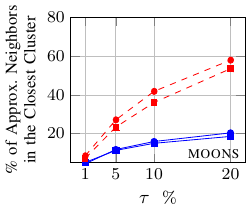}
\end{subfigure}%
\begin{subfigure}{.25\textwidth}
  \centering
  \includegraphics[scale=0.735]{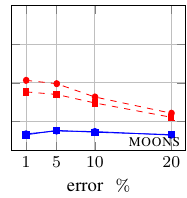}
\end{subfigure}%
\\
\begin{subfigure}{0.5\textwidth}
  \centering
  \includegraphics[scale=0.600]{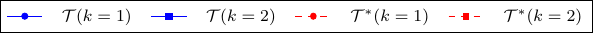}
\end{subfigure}%
\caption{Average percentage of approximate nearest neighbors $k\textsc{nn}'$ that belong to the closest cluster of each object. Plots in the left and right columns show the impact of varying $\tau$ (with no error) and varying  error (with $\tau = 10\%$). $\cT^*(k)$ represents results using augmented triplets for $k \in \{1,2\}$ neighbors.}
	\label{fig:knn}
\end{figure}



\subsection{Clustering} 
The goodness of the proposed approach is also evaluated by performing spectral clustering on the nearest neighborhood graph $k\textsc{nng}$. We construct $k\textsc{nng}$ and $k_w\textsc{nng}$ using approximate neighbors $k\textsc{nn}'$ as described in Section~\ref{sec:clustAndClassification}. We made a comparison with \textsc{lensdepth} to evaluate clustering quality. We implemented \textsc{lensdepth} algorithm~\cite{Kleindessner2017lens} using the same parameters as used in the original study (errorprob = $0$ and $\sigma = 5$). The nearest neighborhood graph is generated with $k=10$. In spectral clustering, we take the number of clusters equal to the number of classes in the dataset. Using augmented triplets $\cT^*$, performed slightly better than using $\cT$. Thus, for the sake of clarity, we only report results for augmented triplets. In Figure~\ref{fig:clusteringCmp}, we plot the purity of resulting clusters, which show improved results for $\cT^*$ as compared to \textsc{lensdepth}. 

\begin{figure}[h!]
\centering
\begin{subfigure}{.25\textwidth}
  \centering
  \includegraphics[scale=0.735]{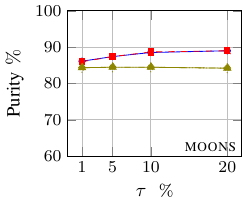}
\end{subfigure}%
\begin{subfigure}{.25\textwidth}
  \centering
  \includegraphics[scale=0.735]{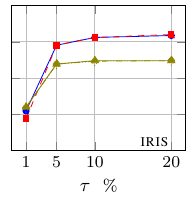}
\end{subfigure}%
\\
\begin{subfigure}{.25\textwidth}
  \centering
  \includegraphics[scale=0.735]{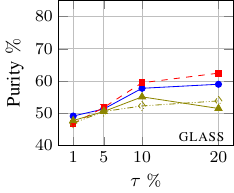}
\end{subfigure}%
\begin{subfigure}{.25\textwidth}
  \centering
  \includegraphics[scale=0.735]{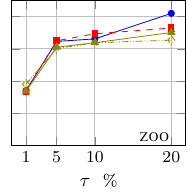}
\end{subfigure}%
\\
\begin{subfigure}{0.5\textwidth}
  \centering
  \includegraphics[scale=0.700]{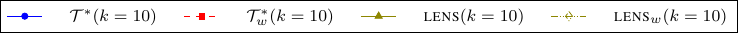}
\end{subfigure}%
\caption{Purity of clusterings using $k\textsc{nng}$, $k_w\textsc{nng}$, and \textsc{lensdepth} algorithm ($k=10$). We perform spectral clustering on $k\textsc{nng}$ and $k_w\textsc{nng}$ graphs and consider the same number of eigenvectors as of true classes in the datasets.}
	\label{fig:clusteringCmp}
\end{figure}



\subsection{Classification} We perform classification task using the $k\textsc{nn}$ classifier with train-test split of $70-30 \%$ for all datasets.  We make comparisons with \textsc{lensdepth} and \textsc{tripletboost} to evaluate the goodness of our approach. To make a comparison with \textsc{lensdepth}, we generate triplets of type~\ref{tripCentral} and then convert these triplets to type~\ref{tripAnchor} for compatibility with our proposed approach. Figure~\ref{fig:classificationCmp} shows comparison results with \textsc{lensdepth}. It is observed that the proposed method performs substantially better as the proportion of triplets increases. 

We make a comparison with \textsc{tripletboost} on two datasets: \textsc{moons} and \textsc{iris}.
Note that \textsc{tripletboost} also incorporates noisy triplets generated from the random noise model (which is different from the noise model used in this paper). So, we make a comparison with \textsc{tripletboost} on triplets with $0\%$ noise. On \textsc{iris} dataset, a comparison using $10\%$ triplets for three distance metrics is made to observe the impact of distance metric. These comparisons are shown in Figure~\ref{fig:classificationCmpWithBoosting}.

\begin{figure}[h!]
\centering
\begin{subfigure}{.25\textwidth}
  \centering
  \includegraphics[scale=0.735]{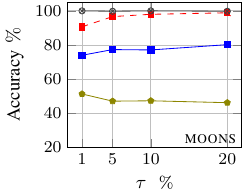}
\end{subfigure}%
\begin{subfigure}{.25\textwidth}
  \centering
  \includegraphics[scale=0.735]{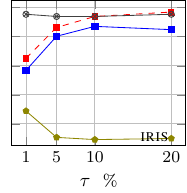}
\end{subfigure}%
\\
\begin{subfigure}{.25\textwidth}
  \centering
  \includegraphics[scale=0.735]{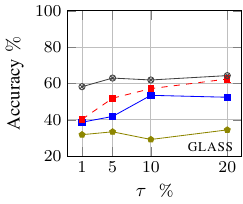}
\end{subfigure}%
\begin{subfigure}{.25\textwidth}
  \centering
  \includegraphics[scale=0.735]{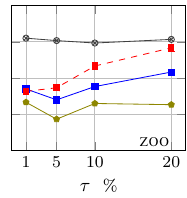}
\end{subfigure}%
\\
\begin{subfigure}{0.5\textwidth}
  \centering
  \includegraphics[scale=0.600]{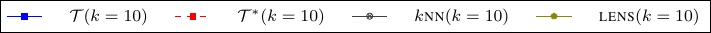}
\end{subfigure}%
\caption{Comparison of classification accuracy of $k\textsc{nn}$ classifier $(k=10)$ with \textsc{lensdepth} using $\cT$ and $\cT^*$ (augmented triplets). $k\textsc{nn}$ shows results based on true neighbors. In this case, {\small $\tau\;\%$} shows the percentage of triplets of type~\ref{tripCentral}.}
	\label{fig:classificationCmp}
\end{figure}

\begin{figure}[h!]
\centering
\begin{subfigure}{.25\textwidth}
  \centering
  \includegraphics[scale=0.70]{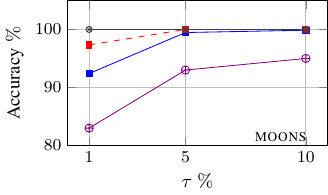}
\end{subfigure}%
\begin{subfigure}{.25\textwidth}
  \centering
  \includegraphics[scale=0.70]{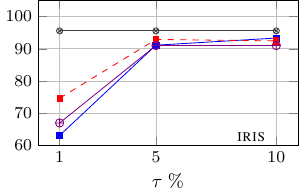}
\end{subfigure}%
\\
\begin{subfigure}{.25\textwidth}
  \centering
  \includegraphics[scale=0.70]{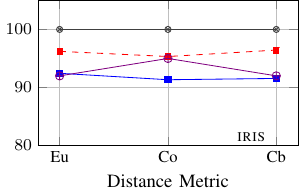}
\end{subfigure}%
\\
\begin{subfigure}{.5\textwidth}
  \centering
  \includegraphics[scale=0.600]{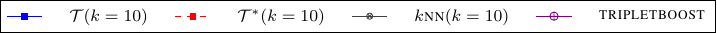}
\end{subfigure}%
\caption{Comparison of $k\textsc{nn}$ classification accuracy with \textsc{tripletboost} using $\cT$ and $\cT^*$. The bottom Figure plots the results on \textsc{iris} dataset with $\tau\;\% = 10$ generated from three different distance metrics, namely Euclidean (Eu), Cosine (Co), Cityblock (Cb).}
	\label{fig:classificationCmpWithBoosting}
\end{figure}

\section{Conclusion}\label{section:conclusion}
In this work, we propose a novel data augmentation technique for similarity triplets that expands the input data by inferring hidden information present in triplets. Data augmentation helps gather triplets for free and improves the quality of  kernel and data analytics tasks. We also present efficient algorithms for both supervised and unsupervised machine learning tasks without kernel evaluation. Empirical evaluation reveals that our techniques perform better than the competitor approaches and are also robust to error.
Future work includes the use of active learning to plan better data collection strategies and the identification of erroneous triplets in the input data.

\bibliographystyle{splncs04}
\bibliography{triplets_bib}
\end{document}